\newcommand{\shortsubsection}[1]{\par \noindent\textbf{#1}\  }
\newcommand\solidline[1][0.35cm]{\rule[0.5ex]{#1}{.4pt}}
\newcommand\dashedline{\mbox{
  \solidline[0.5mm]\hspace{0.8mm}\solidline[0.5mm]\hspace{0.8mm}\solidline[0.5mm]}}
\newcommand{\rul}{\ensuremath{\mathbf{r}}}
\newcommand{\heur}{\ensuremath{h}}
\begin{document}
\title{Exploiting Anti-monotonicity of Multi-label Evaluation Measures for Inducing Multi-label Rules \\ \normalsize \normalfont \ \\Preprint version. To be published in: Proceedings of the Pacific-Asia Conference on Knowledge Discovery and Data Mining (PAKDD) 2018}
\titlerunning{Exploiting Anti-monotonicity of Multi-label Evaluation Measures}
\author{Michael Rapp (\email{mrapp@ke.tu-darmstadt.de}) \\Eneldo Loza Menc\'ia (\email{eneldo@ke.tu-darmstadt.de}) \\Johannes F\"urnkranz (\email{fuernkranz@ke.tu-darmstadt.de})}
\authorrunning{Michael Rapp \and Eneldo Loza Menc\'ia \and Johannes F\"urnkranz}
\institute{Technische Universit\"at Darmstadt, Knowledge Engineering Group \\ Hochschulstrasse 10, D-64289 Darmstadt}
\maketitle

\begin{abstract}
  Exploiting dependencies between labels is considered to be crucial for multi-label classification. Rules are able to expose label dependencies such as implications, subsumptions or exclusions in a human-comprehensible and interpretable manner. However, the induction of rules with multiple labels in the head is particularly challenging, as the number of label combinations which must be taken into account for each rule grows exponentially with the number of available labels. To overcome this limitation, algorithms for exhaustive rule mining typically use properties such as anti-monotonicity or decomposability in order to prune the search space. In the present paper, we examine whether commonly used multi-label evaluation metrics satisfy these properties and therefore are suited to prune the search space for multi-label heads.
\end{abstract}

\section{Introduction}

Multi-label classification (MLC) is the task of learning a model for assigning a set of labels to unknown instances \cite{tsoumakas10MLoverview}. For example, newspaper articles can often be associated with multiple topics. This is in contrast to binary or multi-class classification, where single classes are predicted. As many studies show, MLC approaches that are able to take correlations between labels into account can be expected to achieve better predictive results (see \cite{demb12,menc15,tsoumakas10MLoverview}; and references therein).

In addition to statistical approaches that often rely on complex mathematical concepts, such as Bayesian or neural networks, rule learning algorithms have recently been proposed as an alternative, because rules are not only a natural and simple form to represent a learned model, but they are well suited for making discovered correlations between instance and label attributes explicit \cite{menc15}. Especially for safety-critical application domains, such as medicine, power systems, autonomous driving or financial markets, where hidden malfunctions could lead to life-threatening actions or economic loss, the possibility of interpreting, inspecting and verifying a classification model is essential (cf. e.g., \cite{kayande2009incorporating}). However, the algorithm of \cite{menc15}, which is based on the separate-and-conquer (SeCo) strategy, can only learn dependencies where the presence or absence of a single label depends on a subset of the instance's features. Especially co-occurrences of labels -- a common pattern in multi-label data -- are hence only representable by a combination of rules. Conversely, algorithms based on subgroup discovery were proposed which are able to find single rules that predict a subset of the possible labels \cite{robardet2016local}. However, this framework is limited in the sense that it relies on the adaptation of conventional rule learning heuristics for rating and selecting candidate rules and can thus not be easily adapted to a variety of different loss functions which are commonly used for evaluating multi-label predictions. Such an adaptation is not straight-forward, because it is not known whether these measures satisfy properties like anti-monotonicity that can ensure an efficient exploration of the search space of all possible rule heads -- despite the fact that it grows exponentially with the number of available labels.

Thus, the main contribution of this work (presented in Section~\ref{sec:properties}) is to formally define anti-monotonicity in the context of multi-label rules and to prove that selected multi-label metrics satisfy that property. Based on these findings, we present an algorithm that prunes searches for multi-label rules in Section~\ref{sec:algorithm}. Said algorithm is not meant to set new standards in terms of predictive performance, but to serve as a starting point for developing more enhanced approaches. Nevertheless, we evaluate that it is able to compete with different baselines in terms of predictive and -- more importantly -- computational performance in Section~\ref{sec:evaluation}.

\section{Preliminaries}
\label{sec:preliminaries}

The task of MLC is to associate an instance with one or several labels $\lambda_{i}$ out of a finite label space $\mathbb{L} = (\lambda_{i},...,\lambda_{n})$ with $n = |\mathbb{L}|$ being the number of available labels. An instance $X_{j}$ is typically represented in attribute-value form, i.e., it consists of a vector $X_{j} \coloneqq \langle v_{1},...,v_{l} \rangle \in \mathbb{D} = A_{1} \times ... \times A_{l}$ where $A_i$ is a numeric or nominal attribute. Each instance is mapped to a binary \emph{label vector} $Y_{j} \in \{0,1\}^n$ which specifies the labels that are associated with the example $X_{j}$. Consequently, the training data set of a MLC problem can be defined as a sequence of tuples $T \coloneqq \langle (X_{1},Y_{1}),...,(X_{m},Y_{m}) \rangle \subseteq \mathbb{D} \times \mathbb{L}$ with $m = |T|$. The model which is derived from a given multi-label data set can be viewed as a classifier function $g(.)$ mapping a single example $X$ to a prediction $\hat{Y} = g(X)$.

\subsection{Multi-label rule learning}
\label{section_multi_label_rules}

We are concerned with learning multi-label rules $\rul: \hat{Y} \leftarrow B$. The body $B$ may consist of several conditions, the examples that are covered by the rule have to satisfy. In this work only conjunctive, propositional rules are considered, i.e., each condition compares an attribute's value to a constant by either using equality (nominal attributes) or inequalities (numerical attributes). It is also possible to include label conditions in the body \cite{menc15,male97}. This allows to expose and distinct between \emph{unconditional} or \emph{global} dependencies and \emph{conditional} or \emph{local} dependencies \cite{demb12}.

The head $\hat{Y}$ consists of one or several label attributes ($\hat{y}_{i} = 0$ or $\hat{y}_{i} = 1$) which specify the absence or presence of the corresponding label $\hat{y}_{i}$. Rules that contain a single label attribute in their head are referred to as \emph{single-label head rules}, whereas \emph{multi-label head rules} may contain several label attributes in their head.

A predicted label vector $\hat{Y}$ may have different semantics. We differentiate between \emph{full predictions} and \emph{partial predictions}.
\begin{itemize}
\item \textbf{Full predictions:} Each rule predicts a full label vector, i.e., if a label attribute $\hat{y}_{i}$ is not contained in the head, the absence of the corresponding label $\lambda_{i}$ is predicted.
\item \textbf{Partial predictions:} Each rule predicts the presence or absence of the label only for a subset of the possible labels. For the remaining labels the rule does not make a prediction (but other rules might).
\end{itemize}
We believe that partial predictions have several conceptual and practical advantages and therefore we focus on that particular strategy throughout the remainder of this work.

\subsection{Bipartition evaluation functions}

To evaluate the quality of multi-label predictions, we use bipartition evaluation measures (cf.\ \cite{tsoumakas10MLoverview}) which are based on evaluating differences between true (\emph{ground truth}) and predicted label vectors. They  can be considered as functions of two-dimensional \emph{label confusion matrices} which represent the \emph{true positive} ($TP$), \emph{false positive} ($FP$), \emph{true negative} ($TN$) and \emph{false negative} ($FN$) label predictions. For a given example $X_{j}$ and a label $y_{i}$ the elements of an atomic confusion matrix $C_{i}^{j}$ are computed as
\begin{equation}
  \label{equation_confusion_matrix}
  C_i^j = \left(\begin{matrix}
    TP_i^j & FP_i^j \\
    FN_i^j & TN_i^j
  \end{matrix}\right)
= \left( \begin{matrix}
  y_{i}^j  \hat{y}_{i}^j &&
(1-y_{i}^j)  \hat{y}_{i}^j \\
 (1-y_{i}^j)  (1-\hat{y}_{i}^j) &&
 y_{i}^j  (1-\hat{y}_{i}^j)
\end{matrix}\right)
\end{equation}
where the variables $y_{i}^j$ and $\hat{y}_{i}^j$ denote the absence (0) or presence (1) of label $\lambda_{i}$ of example $X_j$ according to the ground truth or the predicted label vector, respectively.

Note that for candidate rule selection we assess $TP$, $FP$, $TN$, and $FN$ differently. To ensure that absent and present labels have the same impact on the performance of a rule, we always count correctly predicted labels as $TP$ and incorrect predictions as $FP$, respectively. Labels for which no prediction is made are counted as $TN$ if they are absent, or as $FN$ if they are present.

\subsubsection{Multi-label evaluation functions}

In the following some of the most common bipartition metrics $\delta(C)$ used for MLC are presented (cf., e.g., \cite{tsoumakas10MLoverview}). They are surjections $\mathbb{N}^{2x2} \rightarrow \mathbb{R}$ mapping a confusion matrix $C$ to a heuristic value $h \in \left[0,1\right]$. Predictions that reach a greater heuristic value outperform those with smaller values.
\begin{itemize}
  \item \textbf{Precision:} Percentage of correct predictions among all predicted labels.
  \begin{equation}
  \label{equation_precision}\footnotesize
  \delta_{prec}(C) \coloneqq \frac{TP}{TP+FP}
  \end{equation}
  \item \textbf{Hamming accuracy:} Percentage of correctly predicted present and absent labels among all labels.
  \begin{equation}
  \label{equation_hamming_accuracy}\footnotesize
  \delta_{hamm}(C) \coloneqq \frac{TP+TN}{TP+FP+TN+FN}
  \end{equation}
  \item \textbf{F-measure:} Weighted harmonic mean of precision and recall. If $\beta < 1$, precision has a greater impact. If $\beta > 1$, the F-measure becomes more recall-oriented.
  \begin{equation}
  \label{equation_f_measure}\footnotesize
  \delta_{F}(C) \coloneqq \frac{\beta^2 + 1}{\frac{\beta^2}{\delta_{rec}(C)} + \frac{1}{\delta_{prec}(C)}} \text{ , with } \delta_{rec}(C) = \frac{TP}{TP + FN} \text{ and } \beta \in \left[0,\infty\right]
  \end{equation}
  \item \textbf{Subset accuracy:} Percentage of perfectly predicted label vectors among all examples. Per definition, it is always calculated using example-based averaging.
  \begin{equation}
  \label{equation_subset_accuracy}\footnotesize
  \delta_{acc}(C) \coloneqq \frac{1}{m} \sum \limits_{j} \left[ Y_{j} = \hat{Y}_{j} \right] \text{ , with } [x] = \begin{cases}
    1, & \text{if $x$ is true} \\
    0, & \text{otherwise}
  \end{cases}
  \end{equation}
\end{itemize}

\subsubsection{Aggregation and averaging}

When evaluating multi-label predictions which have been made for $m$ examples with $n$ labels one has to deal with the question of how to aggregate the resulting $m \cdot n$ atomic confusion matrices. Essentially, there are four possible averaging strategies -- either \emph{(label- and example-based) micro-averaging}, \emph{label-based (macro-)averaging}, \emph{example-based (macro-) averaging} or \emph{(label- and example-based) macro-averaging}. Due to the space limitations, we restrict our analysis to the most popular aggregation strategy employed in the literature, namely \emph{micro-averaging}. This particular averaging strategy is formally defined as
\begin{equation}
\label{equation_micro_averaging}
\footnotesize
\delta(C) = \delta \left( \sum \nolimits_{j}^{} \sum \nolimits_{i}^{} C_{i}^j \right) \equiv \delta \left( \sum \nolimits_{i}^{} \sum \nolimits_{j}^{} C_{i}^j \right)
\end{equation}
where the $\sum$ operator denotes the cell-wise addition of confusion matrices.

\subsubsection{Relation to conventional association rule discovery}
\label{sec:relation}

To illustrate the difference between measures used in association rule discovery and in multi-label rule learning, assume that the rule $\lambda_1, \lambda_2 \leftarrow B$ covers three examples $(X_1,\{\lambda_2\})$, $(X_2,\{\lambda_1,\lambda_2\})$ and $(X_3,\{\lambda_1\})$. In conventional association rule discovery the head is considered to be satisfied for one of the three covered examples ($X_2$), yielding a precision/confidence value of $\frac{1}{3}$. This essentially corresponds to subset accuracy. On the other hand, micro-averaged precision would correspond to the fraction of $4$ correctly predicted labels among $6$ predictions, yielding a value of $\frac{2}{3}$.

\section{Properties of multi-label evaluation measures}
\label{sec:properties}

To induce multi-label head rules, we need to find the multi-label head $\hat{Y}$ which reaches the best possible performance
\begin{equation}
h_{max} = \max_{\hat{Y}} \heur(\rul) = \max_{\hat{Y}} \heur(\hat{Y} \leftarrow B)
\end{equation}
given an evaluation function $\heur(.)$ and a body $B$. In this section we consider rule evaluation functions that are based on micro-averaged atomic confusion matrices in a partial prediction setting, i.e., $\heur(\rul) = \delta(C)$ where $\delta(C)$ is defined as in \eqref{equation_micro_averaging}.

Due to the exponential complexity of an exhaustive search, it is crucial to prune the search for the best multi-label head by leaving out unpromising label combinations. The first property which can be exploited for pruning searches -- while still being able to find the best solution -- is \emph{anti-monotonicity}.

\begin{definition}[Anti-monotonicity]
\label{definition_anti_monotonicity}
Let $\hat{Y}_{p} \leftarrow B$ and $\hat{Y}_{s} \leftarrow B$ denote two multi-label head rules consisting of body $B$ and heads $\hat{Y}_{p}$, respectively $\hat{Y}_{s}$. It is further assumed that $\hat{Y}_{p} \subset \hat{Y}_{s}$. A multi-label evaluation function $\heur$ is \emph{anti-monotonic} if the following condition is met, i.e., if no head $Y_{a}$ that results from adding additional labels to $Y_{s}$ may result in $h_{max}$ being reached:
\[
\hat{Y}_{p} \subset \hat{Y}_{s} \wedge \heur(\hat{Y}_{s} \leftarrow B) < \heur(\hat{Y}_{p} \leftarrow B) \Longrightarrow \heur(\hat{Y}_{a} \leftarrow B) < h_{max} \text{ , } \forall \hat{Y}_{a}: \hat{Y}_{s} \subset \hat{Y}_{a}
\]
\end{definition}

In addition to the adaptation of anti-monotonicity in Definition~\ref{definition_anti_monotonicity}, we propose \emph{decomposability} as a stronger criterion. It comes at linear costs, as the best possible head can be deduced from considering each available label separately. Due to its restrictiveness, if Definition~\ref{definition_decomposability} is met, Definition~\ref{definition_anti_monotonicity} is implied to be met as well.

\begin{definition}[Decomposability]
\label{definition_decomposability}
A multi-label evaluation function $\heur$ is \emph{decomposable} if the following conditions are met:
\begin{enumerate}[label=\emph{\roman*})]
  \item If the multi-label head rule $\hat{Y} \leftarrow B$ contains a label attribute $\hat{y}_{i} \in \hat{Y}$ for which the corresponding single-label head rule $\hat{y}_{i} \leftarrow B$ does not reach $h_{max}$, the multi-label head rule cannot reach that performance either (and vice versa).
\[
\exists i \left( \hat{y}_{i} \in \hat{Y} \wedge \heur(\hat{y}_{i} \leftarrow B) < h_{max} \right) \Longleftrightarrow \heur(\hat{Y} \leftarrow B) < h_{max}
\]
  \item If all single label head rules $\hat{y}_{i} \leftarrow B$ which correspond to the label attributes of the multi-label head $\hat{Y}$ reach $h_{max}$, the multi-label head rule $\hat{Y} \leftarrow B$ reaches that performance as well (and vice versa).
\[
\heur(\hat{y}_{i} \leftarrow B) = h_{max} \text{ , } \forall \hat{y}_{i} \left( \hat{y}_{i} \in \hat{Y} \right) \Longleftrightarrow \heur(\hat{Y} \leftarrow B) = h_{max}
\]
\end{enumerate}
\end{definition}
In the following we examine selected multi-label metrics in terms of decomposability and anti-monotonicity to reveal whether they satisfy these properties when making partial predictions (cf. Section \ref{section_multi_label_rules}).
\begin{theorem}
\label{theorem_precision_mm}
Micro-averaged precision is decomposable.
\end{theorem}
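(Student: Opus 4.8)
The plan is to exploit the additive structure of the micro-averaged confusion matrix to express the precision of a multi-label head as the (arithmetic) mean of the precisions of its constituent single-label heads, and then to invoke the elementary fact that a mean of numbers never exceeds their maximum, with equality precisely when every term attains that maximum.

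First I would fix the body $B$, which fixes the set of covered examples, and note that under partial predictions a label attribute $\hat{y}_i$ contributes to the $TP$ and $FP$ cells only when $\hat{y}_i \in \hat{Y}$, whereas labels outside the head feed solely the $TN$/$FN$ cells and hence do not influence $\delta_{prec}$. Writing $TP_i \coloneqq \sum_j TP_i^j$ and $FP_i \coloneqq \sum_j FP_i^j$ for the contribution of a single predicted label $i$, the candidate-selection counting convention makes each covered example contribute exactly one unit to either $TP_i$ or $FP_i$, so $TP_i + FP_i = N$, the number of covered examples, independently of $i$ and of the predicted direction. Micro-averaging and the dependence of $\delta_{prec}$ on $TP$ and $FP$ alone then yield the identity I would establish,
\[
\heur(\hat{Y} \leftarrow B) = \frac{\sum_{i \in \hat{Y}} TP_i}{\sum_{i \in \hat{Y}} (TP_i + FP_i)} = \frac{1}{|\hat{Y}|} \sum_{i \in \hat{Y}} \heur(\hat{y}_i \leftarrow B),
\]
where $\heur(\hat{y}_i \leftarrow B) = TP_i / (TP_i + FP_i)$ uses the same predicted direction as in $\hat{Y}$; because the weights $TP_i + FP_i$ all equal $N$, the micro-averaged precision is simply the arithmetic mean of the single-label precisions.

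The next step is to identify $h_{max}$. Since the precision of every head is a mean of single-label precisions, no head can surpass the largest single-label precision, and that value is realised by the associated single-label head; hence $h_{max}$ equals the best single-label precision and $\heur(\hat{y}_i \leftarrow B) \le h_{max}$ for all $i$. Both parts of Definition~\ref{definition_decomposability} then reduce to the equality case of the mean-versus-maximum bound. For part (ii): if every $\heur(\hat{y}_i \leftarrow B) = h_{max}$ their mean is $h_{max}$, and conversely a mean of terms each at most $h_{max}$ equals $h_{max}$ only if every term equals $h_{max}$. Part (i) is the logical contrapositive and follows because a single summand strictly below $h_{max}$ drags the mean strictly below $h_{max}$; the two ``vice versa'' directions are thus handled uniformly.

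The one point demanding care -- and the closest thing to an obstacle -- is the strictness in part (i), which needs each label in the head to be evaluated on at least one covered example, i.e.\ $N > 0$; this is exactly what the partial-prediction counting convention guarantees for a nonempty body, and it is also what makes all the weights coincide so that the weighted average collapses to a plain arithmetic mean. Everything beyond this is the standard convexity argument.
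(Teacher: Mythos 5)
Your proposal is correct and follows essentially the same route as the paper: both rest on the observation that all single-label heads sharing the body $B$ have the same denominator $TP_i + FP_i = |C|$, so the micro-averaged precision of a multi-label head collapses to the arithmetic mean of the single-label precisions, from which both conditions of Definition~\ref{definition_decomposability} follow by the mean-versus-maximum argument. If anything, your write-up is slightly more explicit than the paper's (which stops at the mean identity and the equality case for $h_{max}$), particularly in spelling out the ``vice versa'' directions and the $N>0$ caveat.
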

\begin{proof}
We rewrite the performance calculation for a multi-label head rule $\rul: \hat{Y} \leftarrow B$ with $\heur(\rul) = h_{max}$ using the fact that the single label head rules $\rul_i: \hat{y}_{i} \leftarrow B$ with $\hat{y}_{i} \in \hat{Y}$ share the same body $B$ and therefore cover the same number of examples $|C|$.
\begin{equation}
\footnotesize
\label{equation_precision_mm}
\begin{split}
\heur(\rul) = & \frac{\sum \limits_{\hat{y}_{i} \in \hat{Y}}^{} \sum \limits_{j}^{} TP_{i}^{j}}{\sum \limits_{\hat{y}_{i} \in \hat{Y}}^{} \sum \limits_{j}^{} p_{i}^{j}} \text{ , with } p_{i}^{j} = TP_{i}^{j} + FP_{i}^{j} \text{ and } \sum \limits_{j}^{} p_{i}^{j} = |C| \text{ , } \forall i \\
= & \frac{\sum \limits_{\hat{y}_{i} \in \hat{Y}}^{} \sum \limits_{j}^{} TP_{i}^{j}}{|\hat{Y}| \cdot |C|} = \frac{1}{|\hat{Y}|} \sum \limits_{\hat{y}_{i} \in \hat{Y}}^{} \frac{\sum \limits_{j}^{} TP_{i}^{j}}{|C|} \equiv \frac{1}{|\hat{Y}|} \sum \limits_{\hat{y}_{i} \in \hat{Y}}^{} \heur(\rul_i)
\end{split}
\end{equation}
Thus, the micro-averaged precision for $\rul$ corresponds to the average of the micro-averaged precision of the single-label head rules $\rul_i$. As we assume that $\heur(\rul)$ is maximal, it follows that $\heur(\rul) = \heur(\rul_i)$ for all single-label head rules $\rul_i$.
\end{proof}
\begin{theorem}
Micro-averaged Hamming accuracy is decomposable.
\end{theorem}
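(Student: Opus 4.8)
The plan is to reuse the decomposition already established for micro-averaged precision in Theorem~\ref{theorem_precision_mm} rather than to argue decomposability from scratch. The decisive observation is that the confusion matrix of the rule $\rul: \hat{Y} \leftarrow B$ is aggregated over the head labels $\hat{y}_{i} \in \hat{Y}$ and the covered examples only, and that for every head label a prediction is in fact made. Under the modified counting described in Section~\ref{sec:preliminaries}, such a prediction is booked as a true positive when correct and as a false positive when incorrect; true negatives and false negatives arise solely for labels for which \emph{no} prediction is made. Consequently $TN_{i}^{j} = 0$ and $FN_{i}^{j} = 0$ for all $\hat{y}_{i} \in \hat{Y}$ and all covered examples $j$.

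Concretely, I would first write micro-averaged Hamming accuracy for the multi-label head rule as
\[
\heur(\rul) = \frac{\sum_{\hat{y}_{i} \in \hat{Y}} \sum_{j} \left( TP_{i}^{j} + TN_{i}^{j} \right)}{\sum_{\hat{y}_{i} \in \hat{Y}} \sum_{j} \left( TP_{i}^{j} + FP_{i}^{j} + TN_{i}^{j} + FN_{i}^{j} \right)},
\]
and then substitute $TN_{i}^{j} = FN_{i}^{j} = 0$. The numerator collapses to $\sum_{\hat{y}_{i} \in \hat{Y}} \sum_{j} TP_{i}^{j}$ and the denominator to $\sum_{\hat{y}_{i} \in \hat{Y}} \sum_{j} \left( TP_{i}^{j} + FP_{i}^{j} \right) = |\hat{Y}| \cdot |C|$, which is exactly the expression appearing in the proof of Theorem~\ref{theorem_precision_mm}. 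Hence $\heur(\rul)$ coincides with the micro-averaged precision of $\rul$ and can again be rewritten as the average $\tfrac{1}{|\hat{Y}|} \sum_{\hat{y}_{i} \in \hat{Y}} \heur(\rul_{i})$ over the single-label head rules, each of which covers the same $|C|$ examples. Both implications of Definition~\ref{definition_decomposability} then follow by the identical averaging argument already used for precision: a maximal average over values bounded by $h_{max}$ forces every summand to equal $h_{max}$, and conversely.

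The only real content is therefore the vanishing of $TN_{i}^{j}$ and $FN_{i}^{j}$ on the aggregated head labels, so I expect this to be the step to state carefully; it rests on the fact that membership of $\hat{y}_{i}$ in the head means the rule necessarily predicts presence or absence of $\lambda_{i}$, leaving no ``no-prediction'' cell to populate the true-negative or false-negative count. Once this is made explicit, the claim reduces entirely to the previous theorem and requires no new algebra or estimates; the remaining manipulations are word-for-word those of the precision case.
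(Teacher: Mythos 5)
There is a genuine error in the decisive step. You assume that the confusion matrix of $\rul: \hat{Y} \leftarrow B$ is aggregated over the covered examples only, so that $TN_{i}^{j} = FN_{i}^{j} = 0$ throughout and micro-averaged Hamming accuracy collapses to micro-averaged precision. That is not the setting of the paper: the micro-average in \eqref{equation_micro_averaging} runs over \emph{all} $m$ examples, and the counting convention of Section~\ref{sec:preliminaries} explicitly books labels for which no prediction is made -- in particular every head label on every \emph{uncovered} example -- as $TN$ (if absent) or $FN$ (if present). Hence for $\hat{y}_{i} \in \hat{Y}$ the per-label denominator is $\sum_{j}\left( TP_{i}^{j}+FP_{i}^{j}+TN_{i}^{j}+FN_{i}^{j} \right) = m$, not $|C|$, and the numerator picks up the $TN$ contributions of absent labels on uncovered examples. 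Micro-averaged Hamming accuracy therefore does not coincide with precision -- the paper's later observation that precision, unlike the other metrics, ``depends exclusively on the examples it covers'' rests precisely on this difference -- and your reduction to Theorem~\ref{theorem_precision_mm} fails.

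The conclusion is nevertheless reachable by the same kind of argument you use at the end, and this is what the paper does: since the per-label denominator is the \emph{constant} $m$ for every $\hat{y}_{i} \in \hat{Y}$, one still has
\[
\heur(\rul) = \frac{\sum_{\hat{y}_{i} \in \hat{Y}} \sum_{j} \left( TP_{i}^{j} + TN_{i}^{j} \right)}{|\hat{Y}| \cdot m} = \frac{1}{|\hat{Y}|} \sum_{\hat{y}_{i} \in \hat{Y}} \heur(\rul_{i}),
\]
an unweighted arithmetic mean of the single-label Hamming accuracies, from which both directions of Definition~\ref{definition_decomposability} follow exactly as in your closing averaging argument. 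So the skeleton of your proof is right; what is wrong is the identification of Hamming accuracy with precision via the vanishing of $TN$ and $FN$, which holds only on covered examples and discards the part of the metric that actually distinguishes it.
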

\begin{proof}
Similar to \eqref{equation_precision_mm}, we rewrite the micro-averaged Hamming accuracy of a multi-label head rule $\rul: \hat{Y} \leftarrow B$ with $\heur(\rul) = h_{max}$ in terms of averaging the performance of single-label head rules $\rul_i: \hat{y}_{i} \leftarrow B$. This is possible as the performance for each label $\hat{y}_{i}$ calculates as the percentage of $TP$ and $TN$ among all $m$ labels. For reasons of simplicity, we use the abbreviations $P_{i}^{j} = TP_{i}^{j} + FN_{i}^{j}$ and $N_{i}^{j} = FP_{i}^{j} + TN_{i}^{j}$.
\begin{equation}
\footnotesize
\label{equation_hamming_accuracy_mm}
\begin{split}
\heur(\rul) = & \frac{\sum \limits_{\hat{y}_{i} \in \hat{Y}} \sum \limits_{j} \left( TP_{i}^{j} + TN_{i}^{j} \right)}{\sum \limits_{\hat{y}_{i} \in \hat{Y}} \sum \limits_{j} \left( P_{i}^{j} + N_{i}^{j} \right)} \text{ , with } \sum \limits_{j} \left( P_{i}^{j} + N_{i}^{j} \right) = m \text{ , } \forall i \\
= & \frac{\sum \limits_{\hat{y}_{i} \in \hat{Y}} \sum \limits_{j} \left( TP_{i}^{j} + TN_{i}^{j} \right)}{|\hat{Y}| \cdot m} = \frac{1}{|\hat{Y}|} \sum \limits_{\hat{y}_{i} \in \hat{Y}} \frac{\sum \limits_{j} \left( TP_{i}^{j} + TN_{i}^{j} \right)}{m} \equiv \frac{1}{|\hat{Y}|} \sum \limits_{\hat{y}_{i} \in \hat{Y}} \heur(\rul_i)
\end{split}
\end{equation}
\end{proof}
\begin{theorem}
Subset accuracy is anti-monotonic.
\end{theorem}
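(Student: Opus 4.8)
The plan is to reduce the anti-monotonicity property of Definition~\ref{definition_anti_monotonicity} to a much simpler monotonicity statement: for a fixed body $B$, micro-averaged subset accuracy can only decrease (never increase) as label attributes are appended to the head. Once this monotonicity is in hand, the required implication follows by a short transitivity argument, without any of the algebraic rewriting used for the decomposable measures.

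First I would unfold what subset accuracy means for a single rule in the partial prediction setting. Because the body $B$ is fixed, every candidate head is evaluated on the same set of covered examples, so the denominator in $\delta_{acc}$ is identical across all candidates and only the numerator varies. Following the reading made explicit in Section~\ref{sec:relation}, a covered example $X_j$ contributes $1$ to $\sum_j [Y_j = \hat{Y}_j]$ precisely when the head is satisfied on $X_j$, i.e.\ when $\hat{y}_i = y_i^j$ for every label attribute $\hat{y}_i \in \hat{Y}$. I would record this by writing $\heur(\hat{Y} \leftarrow B) = \frac{1}{m}\,|E(\hat{Y})|$, where $E(\hat{Y})$ denotes the set of covered examples on which all head predictions are correct.

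The key step, and what I expect to be the conceptual heart of the argument, is the monotonicity lemma: if $\hat{Y}_1 \subseteq \hat{Y}_2$ then $E(\hat{Y}_2) \subseteq E(\hat{Y}_1)$. Indeed, since the label-attribute assignments of $\hat{Y}_1$ form a subset of those of $\hat{Y}_2$, any example on which every prediction of the larger head is correct is in particular an example on which every prediction of the smaller head is correct. With the denominator fixed this gives $\heur(\hat{Y}_2 \leftarrow B) \le \heur(\hat{Y}_1 \leftarrow B)$: enlarging the head only adds constraints that can further shrink the set of perfectly matched examples. The one subtlety to handle carefully is the partial prediction semantics, namely that labels absent from the head do not enter the comparison $[Y_j = \hat{Y}_j]$; this is exactly what guarantees that extending the head can never rescue a previously mismatched example and thus never raises the value.

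It then remains to assemble the implication. Assuming the hypotheses $\hat{Y}_{p} \subset \hat{Y}_{s}$ and $\heur(\hat{Y}_{s} \leftarrow B) < \heur(\hat{Y}_{p} \leftarrow B)$, let $\hat{Y}_{a}$ be any head with $\hat{Y}_{s} \subset \hat{Y}_{a}$. Applying the lemma to $\hat{Y}_{s} \subseteq \hat{Y}_{a}$ yields $\heur(\hat{Y}_{a} \leftarrow B) \le \heur(\hat{Y}_{s} \leftarrow B)$, and combining this with the hypothesis and the trivial bound $\heur(\hat{Y}_{p} \leftarrow B) \le h_{max}$ produces the chain $\heur(\hat{Y}_{a} \leftarrow B) \le \heur(\hat{Y}_{s} \leftarrow B) < \heur(\hat{Y}_{p} \leftarrow B) \le h_{max}$, whence $\heur(\hat{Y}_{a} \leftarrow B) < h_{max}$, exactly as demanded. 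Finally I would briefly note why the stronger Definition~\ref{definition_decomposability} does not hold: distinct labels may each be perfectly predictable on different subsets of the covered examples while no single example has all of them correct at once, so conjoining them in one head can strictly lower the value even though every single-label head attains $h_{max}$. This is precisely why subset accuracy must be treated as merely anti-monotonic rather than decomposable.
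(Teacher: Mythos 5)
Your proof is correct and takes essentially the same approach as the paper: the monotonicity lemma you isolate (extending a head can only remove examples from the perfectly-matched set $E(\hat{Y})$, never add them, because a label predicted incorrectly stays incorrect) is exactly the fact the paper establishes by chasing a single witness example through its lines 4--6, and your concluding chain $\heur(\hat{Y}_a \leftarrow B) \le \heur(\hat{Y}_s \leftarrow B) < \heur(\hat{Y}_p \leftarrow B) \le h_{max}$ mirrors the paper's final two lines. If anything, stating the global monotonicity $\hat{Y}_1 \subseteq \hat{Y}_2 \Rightarrow \heur(\hat{Y}_2 \leftarrow B) \le \heur(\hat{Y}_1 \leftarrow B)$ up front makes the last inference (from ``one example drops to indicator $0$'' to ``$< h_{max}$'') more transparent than the paper's presentation, and your closing remark on why decomposability fails is a sensible, correct addition.
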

\begin{proof}
In accordance with Definition~\ref{definition_anti_monotonicity}, two multi-label head rules $\hat{Y}_{p} \leftarrow B$ and \linebreak $\hat{Y}_{s} \leftarrow B$, for whose heads the subset relationship $\hat{Y}_{p} \subset \hat{Y}_{s}$ holds, take part in equation \eqref{equation_subset_accuracy_am}. The subscript notation $\left.x\right|_{\hat{Y}}$ is used to denote that a left-hand expression $x$ should be evaluated using the rule $\hat{Y} \leftarrow B$. The proof is based on writing subset accuracy in terms of $TP$ and $TN$ (cf. line 2).
\begin{align}
\footnotesize
\label{equation_subset_accuracy_am}
\begin{split}
& \hat{Y}_{p} \subset \hat{Y}_{s} \wedge \heur(\hat{Y}_{s} \leftarrow B) < \heur(\hat{Y}_{p} \leftarrow B) \\
\Rightarrow & \left.\frac{1}{m} \sum \limits_{j} \left[ \sum \limits_{\hat{y}_{i} \in \hat{Y}} \left( TP_{i}^{j} + TN_{i}^{j} \right) = |\hat{Y}| \right] \right|_{\hat{Y}_{s}} \hspace{-1em} < \left.\frac{1}{m} \sum \limits_{j} \left[ \sum \limits_{\hat{y}_{i} \in \hat{Y}} \left( TP_{i}^{j} + TN_{i}^{j} \right) = |\hat{Y}| \right] \right|_{\hat{Y}_{p}} \hspace{-1em} \leq h_{max} \\
\Rightarrow & \exists j \left( 0 = \left.\left[ \sum \limits_{\hat{y}_{i} \in \hat{Y}} \left( TP_{i}^{j} + TN_{i}^{j} \right) = |\hat{Y}| \right] \right|_{\hat{Y}_{s}} < \left.\left[ \sum \limits_{\hat{y}_{i} \in \hat{Y}} \left( TP_{i}^{j} + TN_{i}^{j} \right) = |\hat{Y}| \right] \right|_{\hat{Y}_{p}} = 1 \right) \\
\Rightarrow & \exists \hat{y}_{i} \exists j \left( \hat{y}_{i} \in \hat{Y}_{s} \wedge \left.\left(TP_{i}^{j} + TN_{i}^{j} \right) < |\hat{Y}| \right|_{\hat{Y}_{s}} \right) \\
\Rightarrow & \exists \hat{y}_{i} \exists j \left( \hat{y}_{i} \in \hat{Y}_{a} \wedge \left.\left(TP_{i}^{j} + TN_{i}^{j} \right) < |\hat{Y}| \right|_{\hat{Y}_{a}} \right) \text{ , } \forall \hat{Y}_{a} \left( \hat{Y}_{s} \subset \hat{Y}_{a} \right) \\
\Rightarrow & \exists j \left( \left.\left[ \sum \limits_{\hat{y}_{i} \in \hat{Y}} \left( TP_{i}^{j} + TN_{i}^{j} \right) = |\hat{Y}| \right] \right|_{\hat{Y}_{a}} = 0 \right) \text{ , } \forall \hat{Y}_{a} \left( \hat{Y}_{s} \subset \hat{Y}_{a} \right) \\
\Rightarrow & \frac{1}{m} \sum \limits_{j} \left.\left[ \sum \limits_{\hat{y}_{i} \in \hat{Y}} \left( TP_{i}^{j} + TN_{i}^{j} \right) = |\hat{Y}| \right] \right|_{\hat{Y}_{a}} < h_{max} \text{ , } \forall \hat{Y}_{a} \left( \hat{Y}_{s} \subset \hat{Y}_{a} \right) \\
\equiv & \heur(\hat{Y}_{a} \leftarrow B) < h_{max} \text{ , } \forall \hat{Y}_{a} \left( \hat{Y}_{s} \subset \hat{Y}_{a} \right)
\end{split}
\end{align}
In \eqref{equation_subset_accuracy_am} it is concluded that when using the rule $\hat{Y}_{s} \leftarrow B$ the performance for at least one example $Y_{j}$ is less than when using the rule $\hat{Y}_{p} \leftarrow B$. Due to the definition of subset accuracy, the performance for that example must be 0 in the first case and 1 in the latter (cf. line 3). As the performance only evaluates to 0 if at least one label is predicted incorrectly, the head $\hat{Y}_{p}$ must contain a label attribute $\hat{y}_{i}$ which predicts the corresponding label incorrectly (cf. line 4). When adding additional label attributes the prediction for that label will still be incorrect (cf. line 5). Therefore, for all multi-label head rules $\hat{Y}_{a} \leftarrow B$ which result from adding additional label attributes to the head $\hat{Y}_{s}$ the performance for the example $Y_{j}$ evaluates to 0 (cf. line 6). Consequently, none of them can reach the overall performance of $\hat{Y}_{p} \leftarrow B$, nor $h_{max}$ (cf. line 7 and 8).
\end{proof}
\begin{lemma}
\label{lemma_recall_mm}
Micro-averaged recall is decomposable.
\end{lemma}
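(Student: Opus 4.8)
The plan is to mirror the proof of Theorem~\ref{theorem_precision_mm}, rewriting the micro-averaged recall of the multi-label head rule $\rul: \hat{Y} \leftarrow B$ in terms of the single-label head rules $\rul_i: \hat{y}_{i} \leftarrow B$. Writing $P_{i}^{j} = TP_{i}^{j} + FN_{i}^{j}$ and summing only over the label attributes contained in the head, micro-averaged recall takes the pooled form $\heur(\rul) = \big(\sum_{\hat{y}_{i} \in \hat{Y}} \sum_{j} TP_{i}^{j}\big) / \big(\sum_{\hat{y}_{i} \in \hat{Y}} \sum_{j} P_{i}^{j}\big)$. The decisive difference to precision is that the per-label denominators $\sum_{j} P_{i}^{j}$ are in general \emph{not} constant across labels: whereas every covered example contributes exactly one prediction per head label, so that $\sum_{j} p_{i}^{j} = |C|$ in \eqref{equation_precision_mm}, the quantity $\sum_{j} P_{i}^{j}$ counts the number of positives for label $\lambda_{i}$, which may differ from label to label. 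Hence recall will not collapse into a uniform average, and the averaging identity from \eqref{equation_precision_mm} cannot be copied verbatim.

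First I would abbreviate $P_{i} \coloneqq \sum_{j} P_{i}^{j}$ and observe that $\sum_{j} TP_{i}^{j} = P_{i} \cdot \heur(\rul_i)$ holds by the definition of single-label recall. Substituting this into the pooled expression rewrites $\heur(\rul)$ as the \emph{weighted} average $\heur(\rul) = \big(\sum_{\hat{y}_{i} \in \hat{Y}} P_{i}\, \heur(\rul_i)\big) / \big(\sum_{\hat{y}_{i} \in \hat{Y}} P_{i}\big)$ of the single-label recalls, with nonnegative weights $P_{i}$. Because a weighted average never exceeds its largest entry, $\heur(\rul) \le \max_{\hat{y}_{i} \in \hat{Y}} \heur(\rul_i)$, and since any single-label head is itself an admissible head attaining its own value, the best value over all heads satisfies $h_{max} = \max_{i} \heur(\rul_i)$; in particular $h_{max}$ is already reached by an appropriate single-label head.

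It then remains to derive the two equivalences of Definition~\ref{definition_decomposability} from this weighted-average representation. Setting $\heur(\rul) = h_{max}$ and clearing the denominator yields $\sum_{\hat{y}_{i} \in \hat{Y}} P_{i} \big(h_{max} - \heur(\rul_i)\big) = 0$; as every factor $h_{max} - \heur(\rul_i)$ is nonnegative and every weight $P_{i}$ is nonnegative, each summand must vanish, forcing $\heur(\rul_i) = h_{max}$ for every head label with $P_{i} > 0$. This establishes direction~\emph{ii)} (and, contrapositively, direction~\emph{i)}): $\hat{Y} \leftarrow B$ attains $h_{max}$ precisely when all of its single-label head rules do. The main obstacle is exactly the non-uniformity of the weights, which rules out the direct averaging identity used for precision and forces the weighted-average argument instead; the only remaining subtlety is the degenerate case $P_{i} = 0$ (a label that is positive in no relevant example), for which $\heur(\rul_i)$ is of the form $0/0$. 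Such a label carries zero weight and hence leaves $\heur(\rul)$ unchanged, so it may be excluded from the head without affecting the value and treated as vacuously reaching $h_{max}$.
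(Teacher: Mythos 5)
Your proof is correct and is essentially the paper's argument in different clothing: the weighted average $\bigl(\sum_i P_i\,\heur(\rul_i)\bigr)/\bigl(\sum_i P_i\bigr)$ is exactly the \emph{mediant} of the single-label recall fractions, and your bound by the largest entry is the mediant inequality the paper invokes. You go somewhat further than the paper by explicitly deriving both conditions of Definition~\ref{definition_decomposability} from $\sum_i P_i\bigl(h_{max}-\heur(\rul_i)\bigr)=0$ and by treating the degenerate $P_i=0$ case, which the paper leaves implicit.
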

\begin{proof}
The \emph{mediant} of fractions $\frac{a_{1}}{b_{1}},...,\frac{a_{n}}{b_{n}}$ is defined as $\frac{a_{1} + ... + a_{n}}{b_{1} + ... + b_{n}}$. The micro-averaged recall of a multi-label head rule $\rul: \hat{Y} \leftarrow B$ is the mediant of the performances which are obtained for corresponding single-label head rules $\rul_i: \hat{y}_{i} \leftarrow B$ with $\hat{y}_{i} \in \hat{Y}$ according to the recall metric.
\begin{equation}
\footnotesize
\label{equation_recall}
\heur(\rul) = \frac{\sum \limits_{\hat{y}_{i} \in \hat{Y}} \sum \limits_{j} TP_{i}^{j}}{\sum \limits_{\hat{y}_{i} \in \hat{Y}} \sum \limits_{j} \left( TP_{i}^{j} + FN_{i}^{j} \right)}
\end{equation}
The \emph{mediant inequality} states that the mediant strictly lies between the fractions it is calculated from, i.e., that $min \left( \frac{a_{1}}{b_{1}},...,\frac{a_{n}}{b_{n}}\right) \leq \frac{a_{1} + ... + a_{n}}{b_{1} + ... + b_{n}} \leq max \left( \frac{a_{1}}{b_{1}},...,\frac{a_{n}}{b_{n}} \right)$. This is in accordance with Definition~\ref{definition_decomposability}.
\end{proof}
\begin{theorem}
Micro-averaged F-measure is decomposable.
\end{theorem}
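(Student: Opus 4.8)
The plan is to reduce the F-measure to the very same mediant structure that settled micro-averaged recall in Lemma~\ref{lemma_recall_mm}. First I would clear the harmonic mean in \eqref{equation_f_measure}: substituting $\delta_{rec}(C) = TP/(TP+FN)$ and $\delta_{prec}(C) = TP/(TP+FP)$ and simplifying gives the single fraction
\[
\delta_F(C) = \frac{(\beta^2+1)\,TP}{\beta^2(TP+FN)+(TP+FP)},
\]
whose numerator and denominator are both linear in the confusion-matrix entries. This rewriting is the decisive step, because it turns the non-trivially combined precision/recall expression into quantities that are additive across labels.

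Next I would micro-average over the head $\hat{Y}$ of $\rul$. Since micro-averaging sums the atomic confusion matrices cell-wise \eqref{equation_micro_averaging}, the linear numerator and denominator distribute over $\sum_{\hat{y}_i \in \hat{Y}}$, yielding
\[
\heur(\rul) = \frac{\sum_{\hat{y}_i \in \hat{Y}}\sum_j (\beta^2+1)TP_i^j}{\sum_{\hat{y}_i \in \hat{Y}}\sum_j \bigl(\beta^2(TP_i^j+FN_i^j)+(TP_i^j+FP_i^j)\bigr)} = \frac{\sum_{\hat{y}_i \in \hat{Y}} a_i}{\sum_{\hat{y}_i \in \hat{Y}} b_i},
\]
where $a_i$ and $b_i$ are exactly the numerator and denominator of the single-label F-measure $\heur(\rul_i) = a_i/b_i$. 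Hence $\heur(\rul)$ is the mediant of the single-label F-measures, precisely as in the recall case.

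Finally I would invoke the mediant inequality already cited in the proof of Lemma~\ref{lemma_recall_mm}, which bounds the mediant between $\min_i \heur(\rul_i)$ and $\max_i \heur(\rul_i)$. Writing the mediant as the convex combination $\sum_i \frac{b_i}{\sum_k b_k}\,\heur(\rul_i)$ (with positive weights summing to one) further shows that it attains the maximum only when every component attains that maximum. This is exactly what Definition~\ref{definition_decomposability} demands: $\heur(\rul)$ reaches $h_{max}$ iff every single-label head $\rul_i$ reaches $h_{max}$, and $\heur(\rul) < h_{max}$ as soon as one of them falls short.

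The one point that needs care — the main obstacle — is the degenerate case $b_i = 0$ (all of $TP_i^j$, $FN_i^j$, $FP_i^j$ vanishing for some head label, i.e.\ an undefined single-label F-measure), since the convex-combination argument presupposes positive denominators. I would dispose of it by adopting the usual convention for the undefined value or by excluding such labels, after which the argument applies verbatim. As a sanity check, the derivation specializes correctly: for $\beta = 0$ every denominator $b_i$ equals $|C|$, the mediant collapses to the arithmetic mean of single-label precisions, and the result is consistent with Theorem~\ref{theorem_precision_mm}.
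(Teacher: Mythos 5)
Your proof is correct, but it takes a genuinely different route from the paper's. The paper keeps the F-measure in the form of a harmonic mean $H(\heur_{rec},\heur_{prec})$ and derives decomposability from the already-established decomposability of micro-averaged precision (Theorem~\ref{theorem_precision_mm}) and recall (Lemma~\ref{lemma_recall_mm}), at the cost of an auxiliary w.l.o.g.\ assumption $h_{max}^{rec}\geq h_{max}^{prec}$ and a case analysis that compares optima of different metrics. You instead clear the harmonic mean into the single fraction $(\beta^2+1)TP/\bigl(\beta^2(TP+FN)+TP+FP\bigr)$, whose numerator and denominator are linear in the confusion-matrix cells, so that micro-averaging over the head turns $\heur(\rul)$ directly into the mediant --- a positive-weight convex combination --- of the single-label values $\heur(\rul_i)$. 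Both properties of Definition~\ref{definition_decomposability} then drop out at once, since a convex combination with strictly positive weights of values bounded by $h_{max}$ equals $h_{max}$ iff every component does. Your argument is self-contained (it needs neither Theorem~\ref{theorem_precision_mm} nor Lemma~\ref{lemma_recall_mm}), supplies the strictness that the bare mediant inequality $\min\leq\cdot\leq\max$ by itself does not, and in fact establishes a uniform statement: any metric that is a ratio of cell-linear quantities with positive denominator is decomposable under micro-averaging, which subsumes precision, Hamming accuracy, recall and the F-measure in one stroke. Your worry about $b_i=0$ is moot in this setting: with partial predictions every covered example contributes either a $TP$ or an $FP$ for each predicted label, so $b_i\geq\sum_j(TP_i^j+FP_i^j)=|C|>0$ whenever the body covers at least one example.
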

\begin{proof}
Micro-averaged F-measure calculates as the (weighted) harmonic mean $H(.)$ of micro-averaged precision and recall. This proof is based on the finding that both of these metrics fulfill the properties of decomposability (cf. Theorem~\ref{theorem_precision_mm} and Lemma~\ref{lemma_recall_mm}). As multiple metrics take part in the proof, we use a superscript notation to distinguish between the best possible performances according to different metrics, e.g., $h_{max}^{F}$ in case of the F-measure. Furthermore, we exploit the inequality $h_{max}^{F} \leq max \left( h_{max}^{rec}, h_{max}^{prec} \right)$.
\begin{equation}
\label{equation_f_measure_mm_1}\footnotesize
\begin{split}
& \exists i \left( \hat{y}_{i} \in \hat{Y} \wedge \heur_{F}(\hat{y}_{i} \leftarrow B) < h_{max}^{F} \leq h_{max}^{rec} \right) \\
\equiv & \exists i \left( \hat{y}_{i} \in \hat{Y} \wedge H \left( \heur_{rec}(\hat{y}_{i} \leftarrow B), \heur_{prec}(\hat{y}_{i} \leftarrow B) \right) < h_{max}^{rec} \right) \\
\Rightarrow & \exists i \left( \hat{y}_{i} \in \hat{Y} \wedge \left( \heur_{rec}(\hat{y}_{i} \leftarrow B) < h_{max}^{rec} \wedge \heur_{prec}(\hat{y}_{i} \leftarrow B) < h_{max}^{rec} \right) \right. \\
& \vee \left. \left( \heur_{prec}(\hat{y}_{i} \leftarrow B) < h_{max}^{rec} \wedge \heur_{rec}(\hat{y}_{i} \leftarrow B) \leq h_{max}^{rec} \right) \right) \\
\Rightarrow & \left( \heur_{rec}(\hat{Y} \leftarrow B) < h_{max}^{rec} \wedge \heur_{prec}(\hat{Y} \leftarrow B) < h_{max}^{rec} \right) \\
& \vee \left( \heur_{prec}(\hat{Y} \leftarrow B) < h_{max}^{rec} \wedge \heur_{rec}(\hat{Y} \leftarrow B) \leq h_{max}^{rec} \right) \\
\Rightarrow & H \left( \heur_{rec}(\hat{Y} \leftarrow B), \heur_{prec}(\hat{Y} \leftarrow B) \right) < h_{max}^{F} \leq h_{max}^{rec} \\
\equiv & \heur_{F}(\hat{Y} \leftarrow B) < h_{max}^{F}
\end{split}
\end{equation}
In \eqref{equation_f_measure_mm_1} the first property of Definition~\ref{definition_decomposability} is proved. As the premise of the proof, we assume w.l.o.g. that the best possible performance according to the recall metric is equal to or greater than the best performance according to precision, i.e., that the relation $h_{max}^{rec} \geq h_{max}^{prec}$ holds. We further assume that the F-measure of a single-label head rule $\rul_i: \hat{y}_{i} \leftarrow B$ is less than the best possible performance $h_{max}$ (cf. line 1 and 2). When rewriting the F-measure in terms of the harmonic mean of precision and recall, it follows that either recall or precision of $r_i$ must be less than $h_{max}^{F}$, respectively $h_{max}^{rec}$. Due to the premise of the proof, $h_{max}^{rec}$ can be considered as an upper limit for both recall and precision (cf. line 3). Furthermore, because precision and recall are decomposable, the multi-label head rule $\rul: \hat{Y} \leftarrow B$ with $\hat{y}_{i} \in \hat{Y}$ cannot outperform $h_{max}^{F}$ (cf. lines 5, 7 and 8). In order to prove the second property of decomposability to be met, the derivation in \eqref{equation_f_measure_mm_2} uses a similar approach as in \eqref{equation_f_measure_mm_1}. However, it is not based on its premise.
\begin{equation}
\label{equation_f_measure_mm_2}\footnotesize
\begin{split}
& \heur_{F}(\hat{y}_{i} \leftarrow B) = h_{max}^{F} \text{ , } \forall \hat{y}_{i} \left( \hat{y}_{i} \in \hat{Y} \right) \\
\equiv & H \left( \heur_{rec}(\hat{y}_{i} \leftarrow B), \heur_{prec}(\hat{y}_{i} \leftarrow B) \right) = h_{max}^{F} \text{ , } \forall \hat{y}_{i} \left( \hat{y}_{i} \in \hat{Y} \right) \\
\Longrightarrow & \heur_{rec}(\hat{y}_{i} \leftarrow B) = \heur_{prec}(\hat{y}_{i} \leftarrow B) = h_{max}^{F} \text{ , } \forall \hat{y}_{i} \left( \hat{y}_{i} \in \hat{Y} \right) \\
\Longrightarrow & \heur_{rec}(\hat{Y} \leftarrow B) = \heur_{prec}(\hat{Y} \leftarrow B) = h_{max}^{F} \\
\Longrightarrow & H \left( \heur_{rec}(\hat{Y} \leftarrow B), \heur_{prec}(\hat{Y} \leftarrow B) \right) = h_{max}^{F} \\
\equiv & \heur_{F}(\hat{Y} \leftarrow B) = h_{max}^{F}
\end{split}
\end{equation}
\end{proof}

\section{Algorithm for learning multi-label head rules}
\label{sec:algorithm}

To evaluate the utility of these properties, we implemented a multi-label rule learning algorithm based on the SeCo algorithm for learning single-label head rules by Loza Menc{\'\i}a an Janssen 
\cite{menc15}. Both algorithms share a common structure where new rules are induced iteratively and the examples they cover are removed from the training data set if enough of their labels are predicted by already learned rules. The rule induction process continues until only few training examples are left. To classify test examples, the learned rules are applied in the order of their induction. If a rule fires, the labels in its head are applied unless they were already set by a previous rule.

For learning new multi-label rules, our algorithm performs a top-down greedy search, starting with the most general rule. By adding additional conditions to the rule's body it can successively be specialized, resulting in less examples being covered. Potential conditions result from the values of nominal attributes or from averaging two adjacent values of the sorted examples in case of numerical attributes. Whenever a new condition is added, a corresponding single- or multi-label head that predicts the labels of the covered examples as accurate as possible must be found.

\shortsubsection{Evaluating possible multi-label heads}
To find the best head for a given body different label combinations must be evaluated by calculating a score based on the used averaging and evaluation strategy. The algorithm performs a breadth-first search by recursively adding additional label attributes to the (initially empty) head and keeps track of the best rated head. Instead of performing an exhaustive search, the search space is pruned according to the findings in Section~\ref{sec:properties}. When pruning according to anti-monotonicity unnecessary evaluations of label combinations are omitted in two ways: On the one hand, if adding a label attribute causes the performance to decrease, the recursion is not continued at deeper levels of the currently searched subtree. On the other hand, the algorithm keeps track of already evaluated or pruned heads and prevents these heads from being evaluated in later iterations. When a decomposable evaluation metric is used no deep searches through the label space must be performed. Instead, all possible single-label heads are evaluated in order to identify those that reach the highest score and merge them into one multi-label head rule.

\tikzstyle{level 1}=[level distance=1.3cm, sibling distance=3.8cm]
\tikzstyle{level 2}=[level distance=1.7cm, sibling distance=2.2cm]
\tikzstyle{level 3}=[level distance=1.9cm, sibling distance=2.1cm]
\tikzstyle{level 4}=[level distance=1.9cm]
\tikzstyle{bag} = [text width=5.0em, text centered]
\tikzstyle{circled-bag} = [bag, circle, dashed, draw=black, inner sep=0pt]
\tikzstyle{arrow} = [-{Latex[scale=1.5]}]
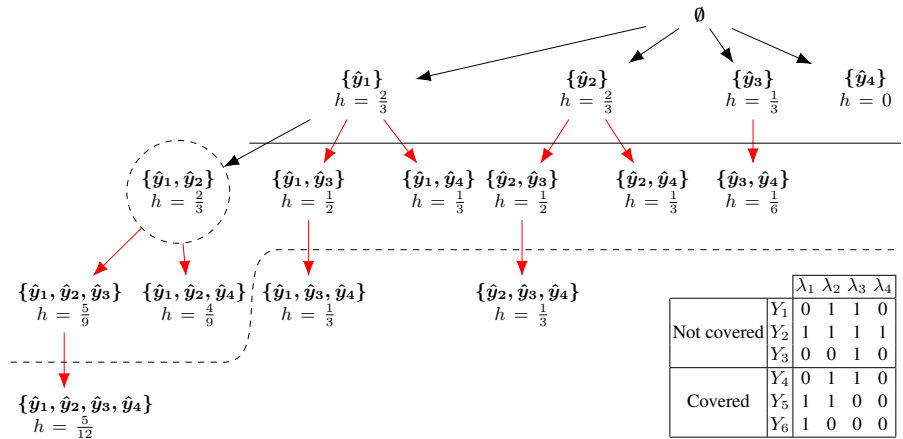
\begin{figure}[h]
\centering
\resizebox{\textwidth}{!}{
\begin{tikzpicture}
\node[bag] {$\boldsymbol{\emptyset}$}
  child[black,growth parent anchor={west}] {
    node(a2)[bag,black] {$\boldsymbol{\{\hat{y}_{1}\}}$ \\ $h=\frac{2}{3}$}
    child[black] {
      node[circled-bag,black] {$\boldsymbol{\{\hat{y}_{1},\hat{y}_{2}\}}$ \\ $h=\frac{2}{3}$}
      child[red,growth parent anchor={center}] {
        node(a)[bag,black] {$\boldsymbol{\{\hat{y}_{1},\hat{y}_{2},\hat{y}_{3}\}}$ \\ $h=\frac{5}{9}$}
          child[red] {
            node[bag,black] {$\boldsymbol{\{\hat{y}_{1},\hat{y}_{2},\hat{y}_{3},\hat{y}_{4}\}}$ \\ $h=\frac{5}{12}$}
            edge from parent[arrow]
          }
        edge from parent[arrow]
      }
      child[red] {
        node(b)[bag,black] {$\boldsymbol{\{\hat{y}_{1},\hat{y}_{2},\hat{y}_{4}\}}$ \\ $h=\frac{4}{9}$}
        edge from parent[arrow]
      }
      edge from parent[arrow]
    }
    child[red,growth parent anchor={center}] {
      node(c)[bag,black] {$\boldsymbol{\{\hat{y}_{1},\hat{y}_{3}\}}$ \\ $h=\frac{1}{2}$}
      child[red] {
        node[bag,black] {$\boldsymbol{\{\hat{y}_{1},\hat{y}_{3},\hat{y}_{4}\}}$ \\ $h=\frac{1}{3}$}
        edge from parent[arrow]
      }
      edge from parent[arrow]
    }
    child[red] {
      node[bag,black] {$\boldsymbol{\{\hat{y}_{1},\hat{y}_{4}\}}$ \\ $h=\frac{1}{3}$}
      edge from parent[arrow]
    }
    edge from parent[arrow]
  }
  child[black] {
    node[bag,black] {$\boldsymbol{\{\hat{y}_{2}\}}$ \\ $h=\frac{2}{3}$}
    child[red] {
      node[bag,black] {$\boldsymbol{\{\hat{y}_{2},\hat{y}_{3}\}}$ \\ $h=\frac{1}{2}$}
        child[red] {
          node[bag,black] {$\boldsymbol{\{\hat{y}_{2},\hat{y}_{3},\hat{y}_{4}\}}$ \\ $h=\frac{1}{3}$}
          edge from parent[arrow]
        }
      edge from parent[arrow]
    }
    child[red] {
      node(d)[bag,black] {$\boldsymbol{\{\hat{y}_{2},\hat{y}_{4}\}}$ \\ $h=\frac{1}{3}$}
      edge from parent[arrow]
    }
    edge from parent[arrow]
  }
  child[black,xshift=-1.0cm] {
    node[bag,black] {$\boldsymbol{\{\hat{y}_{3}\}}$ \\ $h=\frac{1}{3}$}
    child[red] {
      node(e)[bag,black] {$\boldsymbol{\{\hat{y}_{3},\hat{y}_{4}\}}$ \\ $h=\frac{1}{6}$}
      edge from parent[arrow]
    }
    edge from parent[arrow]
  }
  child[black,xshift=-2.9cm,yshift=+0.06cm] {
    node(b2)[bag,black,yshift=+0.0cm] {$\boldsymbol{\{\hat{y}_{4}\}}$ \\ $h=0$}
    edge from parent[arrow]
  };
  \draw[dashed] ([yshift=-0.5cm] a.south west) to ([xshift=0.6cm, yshift=-0.5cm] b.south) to[out=0,in=180] ([xshift=-0.5cm, yshift=-0.5cm] c.south) to ([yshift=-0.5cm] d.south) to ([xshift=1.5cm, yshift=-0.5cm] e.south east);
  \draw[solid] ([xshift=-1.0cm, yshift=-0.4cm] a2.south west) to ([xshift=+1.5cm, yshift=-0.4cm] b2.south west |-  a2.south east);

  \node(table) at ([xshift=-0.4cm,yshift=-2.3cm] e.south east) {
    \begin{tabular}{c c|c c c c|}
    \cline{3-6}
    & & $\lambda_{1}$ & $\lambda_{2}$ & $\lambda_{3}$ & $\lambda_{4}$ \\
    \hline
    \multicolumn{1}{|c|}{\multirow{3}{*}{Not covered}} & $Y_{1}$ & 0 & 1 & 1 & 0 \\
    \multicolumn{1}{|c|}{} & $Y_{2}$ & 1 & 1 & 1 & 1 \\
    \multicolumn{1}{|c|}{} & $Y_{3}$ & 0 & 0 & 1 & 0 \\
    \hline
    \multicolumn{1}{|c|}{\multirow{3}{*}{Covered}} & $Y_{4}$ & 0 & 1 & 1 & 0 \\
    \multicolumn{1}{|c|}{} & $Y_{5}$ & 1 & 1 & 0 & 0 \\
    \multicolumn{1}{|c|}{} & $Y_{6}$ & 1 & 0 & 0 & 0 \\
    \hline
    \end{tabular}
  };
\end{tikzpicture}
}
\caption{Search through the label space $\mathbb{L} = (\lambda_{1},\lambda_{2},\lambda_{3},\lambda_{4})$ using micro-averaged precision of partial predictions. The examples corresponding to label sets $Y_{4},Y_{5},Y_{6}$ are assumed to be covered, whereas those of $Y_{1},Y_{2},Y_{3}$ are not. The dashed line (\protect\dashedline) indicates label combinations that can be pruned with anti-monotonicity, the solid line (\protect\solidline) corresponds to decomposability.}
\label{figure_example}
\end{figure}
\pagebreak
Fig.~\ref{figure_example} illustrates how the algorithm prunes a search through the label space using anti-monotonicity and decomposability. The nodes of the given search tree correspond to the evaluations of label combinations, resulting in heuristic values $\heur$. The edges correspond to adding an additional label to the head which is represented by the preceding node. As equivalent heads must not be evaluated multiple times, the tree is unbalanced.

\section{Evaluation}
\label{sec:evaluation}

The purpose of the experimental evaluation was to demonstrate the applicability of the proposed SeCo algorithm despite the exponentially large search space. We did not expect any significant improvements in predictive performance since no enhancements in that respect
were made to the original algorithm as proposed in \cite{menc15}.

\shortsubsection{Experimental setup}
We compared our multi-label head algorithm to its single-label head counterpart and also to the binary relevance method on 8 different data sets.\footnote{\emph{scene} (6, 1.06), \emph{emotions} (6, 1.87), \emph{flags} (7, 3.39), \emph{yeast} (14, 4.24), \emph{birds} (19, 1.01), \emph{genbase} (27, 1.25), \emph{medical} (45, 1.24), \emph{cal500} (174, 26.15), with respective number of labels and cardinality, from \url{http://mulan.sf.net}. Source code and results are available at \linebreak\url{https://github.com/keelm/SeCo-MLC}.} Following \cite{menc15},
we used Hamming accuracy, subset accuracy (only for multi-label heads), micro-averaged precision and F-measure (with $\beta = 0.5$) on partial predictions for candidate rule selection and also allowed negative assignments $\hat{y}_{i} = 0$ in the heads.

\shortsubsection{Predictive performance}
Due to the space limitations, we limit ourselves to the results of the statistical tests (following \cite{Multiple-Comparisons}). The null hypothesis of the Friedman test ($\alpha=0.05$, $N=8$, $k=10$) that all algorithms have the same predictive quality could not be rejected for many of the evaluation measures, such as subset accuracy and micro- and macro-averaged F1. In the other cases, the Nemenyi post-hoc test was not able to assess a statistical difference between the algorithms using the same heuristic.

\newcommand{\labelfont}[1]{\emph{#1}}
\newcommand{\labelfontinv}[1]{\ensuremath{\overline{\text{\labelfont{#1}}}}}
\begin{figure}[!t]
\begin{minipage}{0.33\textwidth}
  \centering
  \includegraphics[width=1.0\textwidth]{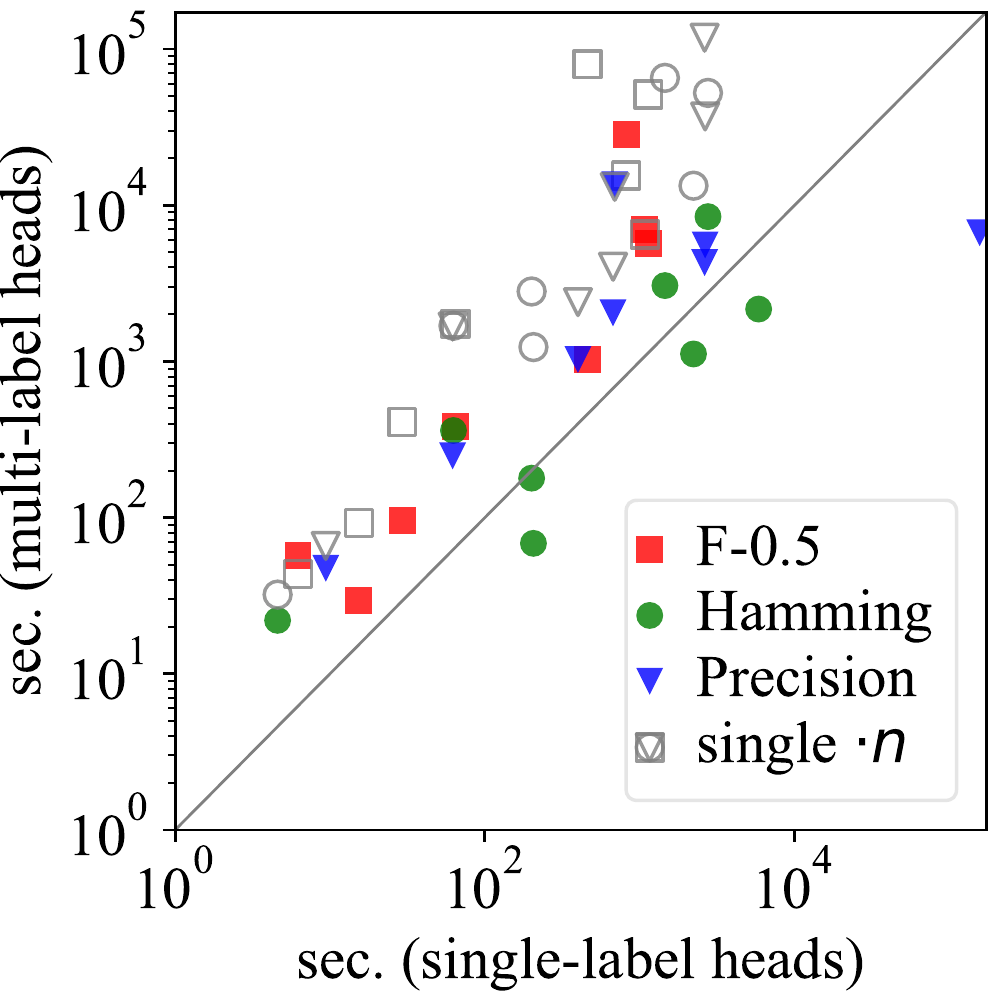}
  \captionof{figure}{Training times.}
  \label{fig:time_comp}
\end{minipage}
\hspace{0.01\textwidth}
\begin{minipage}{0.63\textwidth}
\resizebox{1.0\textwidth}{!}{
\begin{minipage}{1.02\textwidth}
\footnotesize
\noindent\rule{\textwidth}{0.6pt}
\labelfont{red}, \labelfont{green}, \labelfont{blue}, \labelfont{yellow}, \labelfont{white} $\gets$ colors$>$5, stripes$\leq$3 \ \ \  (65,0) \\ 
\labelfontinv{red}, \labelfont{green}, \labelfontinv{blue}, \labelfont{yellow}, \labelfont{white}, \labelfontinv{black}, \labelfontinv{orange} \\
\hspace*{\fill} $\gets$ animate, stripes$\leq$0, crosses$\leq$0 \ \ \  (11,0) \\
{\noindent\rule{1.0\textwidth}{0.3pt}}\\
\begin{tabular}{ l l | l r}
\labelfont{yellow} $\gets$ colors$>$4\ \  & (21,0) \ \ &\  \labelfont{green} $\gets$ text & \ \ (11,0)\\
\labelfont{red} $\gets$ \labelfont{yellow} & (21,0) &\ \labelfontinv{orange} $\gets$ saltires$<$1  \ \ & (1,0)\\
\labelfont{blue} $\gets$ colors$>$5 & (14,0) & \ \labelfontinv{black} $\gets$ area$<$11 &  (12,0) \\
\labelfont{white} $\gets$ \labelfont{blue} & (14,0)\\
\end{tabular}
\noindent\rule{\textwidth}{0.6pt}
\end{minipage}
}
\vspace{1.3em}
\captionof{figure}{Example of learned multi- and single-label head rule lists. $TP$ and $FP$ of respective rules are given in brackets.}
\label{fig:flags}
\end{minipage}
\end{figure}

\shortsubsection{Computational costs}
As expected, SeCo finds rules with a comparable predictive performance when searching for multi-label head rules. However, from the point of view of the proven properties of the evaluation measures, it was more interesting to demonstrate the usefulness of anti-monotonicity and decomposability regarding the computational efficiency. Fig.~\ref{fig:time_comp} shows the relation between the time spent for finding single- vs. multi-label head rules using the same heuristic and data set. The empty forms denote the single-label times multiplied by the number of labels in the data set. Note that full exploration of the labels space was already intractable for the smaller data sets on our system. We can observe that the costs for learning multi-label head rules are in the same order of magnitude despite effectively exploring the full label space for each candidate body.

\shortsubsection{Rule models}
When analyzing the characteristics of the models which have been learned by the proposed algorithm, it becomes apparent that more multi-label head rules are learned when using the precision metric, rather than one of the other metrics. This is due to the fact that precision only takes $TP$ and $FP$ into account. Therefore, the performance of such a rule depends exclusively on the examples it covers. When using another metric, where the performance also depends on uncovered examples, it is very likely that the performance of a rule slightly decreases when adding an additional label to its head. This causes single-label heads to be preferred. The inclusion of a factor which takes the head's size in account could resolve this bias and lead to heads with more labels.

Whether more labels in the head are more desirable or not highly depends on the data set at hand, the particular scenario and the preferences of the user, as generally do comprehensibility and interpretability of rules.
These issues cannot be solved by the proposed method, nor are in the scope of this work. However, the proposed extension of SeCo to multi-label head rules can lay the foundation to further improvements, gaining better control over the characteristics of the induced model and hence better adaption to the requirements of a particular use case.

The extended expressiveness of using multi-label head rules can be visualized by the following example. Consider the rules in Fig.~\ref{fig:flags}, learned on the data set \emph{flags} which maps characteristics of a flag and corresponding country to the colors appearing on the flag. The shown rules all cover the flag of the US Virgin Islands. Whereas in this case the single-label heads allow an easier visualization of the pairwise dependencies between characteristics/labels and labels, the multi-label head rules allow to represent more complex relationships and provide a more direct explanation of why the respective colors are predicted for the flag.

\section{Related work}
\label{sec:related}

So far, only a few approaches to multi-label rule learning can be found in the literature. Most of them are based on association rule (AR) discovery. Alternatively, a few approaches use evolutionary algorithms or classifier systems for evolving multi-label classification rules \cite{RuleBasedMLCwLCM,EvoMLCuARM,MLCRules}. Creating rules with several labels in the head is usually implemented as a post-processing step. For example, \cite{thabtah06MLassociative} and similarly \cite{MLCwARs}
induce single-label ARs which are merged to create multi-label rules. By using a separate-and-conquer approach the step of inducing descriptive but often redundant models of the data is omitted and it is directly tried to produce predictive rules \cite{menc15}.

Most of the approaches mentioned so far have in common that they are restricted to expressing a certain type of relationship since labels are only allowed as the consequent of a rule. Approaches that allow labels as antecedents of an implication are often restricted to global label dependencies, such as the approaches by \cite{jf:PL-08-WS-Park,LI-MLC,papagiannopoulou15deterministicrelations} that use the relationships discovered by AR mining on the label matrix for refining the predictions of multi-label classifiers.

The anti-monotonicity property is already well known from AR learning and subgroup discovery. For instance, it is used by the Apriori algorithm \cite{APriori} to prune searches for frequent item sets. \cite{robardet2016local} already used anti-monotonicity for efficiently mining subgroups in multi-label problems. However, in contrast to our work, they have not considered evaluation measures that are commonly used in MLC, but instead adapted metrics that are commonly used in subgroup discovery. We believe that the anti-monotonicity property must be assessed differently in a multi-label context. This is because AR learning neglects partial matches and labels that are not present in the heads (cf. Sec.~\ref{sec:relation}). In contrast, most MLC measures are much more sensitive in this respect. This is also demonstrated by the more restrictive property of decomposability which does not exist in common metrics for AR.

\section{Conclusions}
\label{sec:conclusions}

In this work, we formulated anti-monotonicity and decomposability criteria for multi-label rule learning and formally proved that several common multi-label evaluation measures meet these properties. Furthermore, we demonstrated how these results can be used to efficiently find rules with multi-label heads that are optimal with respect to commonly used multi-label evaluation functions. Our experiments showed that more work is needed to effectively combine such rules into a powerful rule-based theory.

\bibliographystyle{splncs04}
\footnotesize
\bibliography{bibliography}

\end{document}